\newcommand{\shortpaper}[2]{#2}
\newcommand{\reals}{\mathbb{R}}
\newlength{\dhatheight}
\newtheorem{theorem}{Theorem}
\newtheorem{lemma}{Lemma}
\newcommand{\BlackBox}{\rule{1.5ex}{1.5ex}}  
\newenvironment{proof}{\par\noindent{\bf Proof\ }}{\hfill\BlackBox\\[2mm]}
\newenvironment{subsecproof}[1]{\subsection{#1}}{\hfill\BlackBox\\[2mm]}
\DeclareMathOperator*{\E}{\mathbb{E}}
\DeclareMathOperator*{\prob}{\mathbb{P}}
\DeclareMathOperator*{\argmax}{argmax}
\newcommand{\figref}[1]{Figure~\ref{#1}}
\newcommand{\secref}[1]{Section~\ref{#1}}
\newcommand{\thmref}[1]{Theorem~\ref{#1}}
\newcommand{\lemref}[1]{Lemma~\ref{#1}}
\newcommand{\alert}[1]{\emph{#1}}
\newcommand{\var}{\mathrm{Var}}
\title{Safe, Multi-Agent, Reinforcement Learning for Autonomous Driving}
\author{Shai Shalev-Shwartz \And Shaked Shammah \And Amnon Shashua}
\begin{document}

\maketitle

\begin{abstract}
Autonomous driving is a multi-agent setting where the host vehicle must apply sophisticated negotiation skills with other road users when overtaking, giving way, merging, taking left and right turns and while pushing ahead in unstructured urban roadways.  Since there are many possible scenarios, manually tackling all possible cases will likely yield a too simplistic policy. Moreover, one must balance between unexpected behavior of other drivers/pedestrians and at the same time not to be too defensive so that normal traffic flow is maintained. 

In this paper we apply deep reinforcement learning to the problem of forming long term driving strategies. We note that there are two major challenges that make autonomous driving different from other robotic tasks. First, is the necessity for ensuring functional safety --- something that machine learning has difficulty with given that performance is optimized at the level of an expectation over many instances. Second, the Markov Decision Process model often used in robotics is problematic in our case because of unpredictable behavior of other agents in this multi-agent scenario. We make three contributions in our work. First, we show how policy gradient iterations can be used, and the variance of the gradient estimation using stochastic gradient ascent can be minimized, without Markovian assumptions. Second, we decompose the problem into a composition of a Policy for \texttt{Desires} (which is to be learned) and trajectory planning with hard constraints (which is not learned). The goal of \texttt{Desires} is to enable comfort of driving, while hard constraints guarantees the safety of driving. Third, we introduce a hierarchical temporal abstraction we call an ``Option Graph'' with a gating mechanism that significantly reduces the effective horizon and thereby reducing the variance of the gradient estimation even further. The Option Graph plays a similar role to ``structured prediction'' in supervised learning, thereby reducing sample complexity, while also playing a similar role to LSTM gating mechanisms used in supervised deep networks.

\end{abstract}

\section{Introduction}
Endowing a robotic car with the ability to form long term driving strategies, referred to as ``Driving Policy'', is key for enabling fully autonomous driving. The process of sensing , i.e., the process of forming an environmental model consisting of location of all moving and stationary objects, the position and type of path delimiters (such as curbs, barriers, and so forth), all drivable paths with their semantic meaning and all traffic signs and traffic lights around the car --- is well defined. While sensing is well understood, the definition of Driving Policy, its underlying assumptions, and its functional breakdown is less understood.  The extent of the challenge to form driving strategies that mimic human drivers is underscored by the flurry of media reports on the simplistic driving policies exhibited by current autonomous test vehicles by various practitioners (e.g. \cite{media}). In order to support autonomous capabilities a robotic driven vehicle should adopt human driving negotiation skills when overtaking, giving way, merging, taking left and right turns and while pushing ahead in unstructured urban roadways.  Since there are many possible scenarios, manually tackling all possible cases will likely yield a too simplistic policy. Moreover, one must balance between unexpected behavior of other drivers/pedestrians and at the same time not to be too defensive so that normal traffic flow is maintained. 

These challenges naturally suggest using machine learning approaches. Traditionally, machine learning approaches for planning strategies are studied under the framework of Reinforcement Learning (RL) --- see \cite{bertsekas1995dynamic,kaelbling1996reinforcement,sutton1998reinforcement,szepesvari2010algorithms} for a general overview and \cite{kober2013reinforcement} for a comprehensive review of reinforcement learning in robotics. Using machine learning, and specifically RL, raises two concerns which we address in this paper. The first is about ensuring functional safety of the Driving Policy --- something that machine learning has difficulty with given that performance is optimized at the level of an expectation over many instances. Namely, given the very low probability of an accident the only way to guarantee safety is by scaling up the variance of the parameters to be estimated and the sample complexity of the learning problem --- to a degree which becomes unwieldy to solve.
Second, the Markov Decision Process model often used in robotics is problematic in our case because of unpredictable behavior of other agents in this multi-agent scenario. 

Before explaining our approach for tackling these difficulties, we briefly describe the key idea behind most common reinforcement learning algorithms.
Typically, RL is performed in a sequence of consecutive rounds. At round $t$, the agent (a.k.a planner) observes a state, $s_t \in S$, which represents the sensing state of the system, i.e., the environmental model as mentioned above. It then should decide on an action $a_t \in A$.  After performing the action, the agent receives an immediate reward, $r_t \in \reals$, and is moved to a new state, $s_{t+1}$. The goal of the planner is to maximize the cumulative reward (maybe up to a time horizon or a discounted sum of future rewards). To do so, the planner relies on a policy, $\pi : S \to A$, which maps a state into an action.

%

Most of the RL algorithms rely in some way or another on the mathematically elegant model of a Markov Decision Process (MDP), pioneered by the work of Bellman \cite{bellman1956dynamic,bellman1971introduction}. The Markovian assumption is that the distribution of $s_{t+1}$ is fully determined given $s_t$ and $a_t$. This yields a closed form expression for the cumulative reward of a given policy in terms of the stationary distribution over states of the MDP.
The stationary distribution of a policy can be expressed as a solution to a linear programming problem. This yields two families of algorithms: optimizing with respect to the primal problem, which is called policy search, and optimizing with respect to the dual problem, whose variables are called the \emph{value function}, $V^\pi$. The value function determines the expected cumulative reward if we start the MDP from the initial state $s$, and from there on pick actions according to $\pi$. A related quantity is the state-action value function, $Q^\pi(s,a)$, which determines the cumulative reward if we start from state $s$, immediately pick action $a$, and from there on pick actions according to $\pi$. The $Q$ function gives rise to a crisp characterization of the optimal policy (using the so called Bellman's equation), and in particular it shows that the optimal policy is a deterministic function from $S$ to $A$ (in fact, it is the greedy policy with respect to the optimal $Q$ function). 

In a sense, the key advantage of the MDP model is that it allows us to couple all the future into the present using the $Q$ function. That is, given that we are now in state $s$, the value of $Q^\pi(s,a)$ tells us the effect of performing action $a$ at the moment on the entire future. Therefore, the $Q$ function gives us a local measure of the quality of an action $a$, thus making the RL problem more similar to supervised learning. 

Most reinforcement learning algorithms approximate the $V$ function or the $Q$ function in one way or another. Value iteration algorithms, e.g. the $Q$ learning algorithm \cite{watkins1992q}, relies on the fact that the $V$ and $Q$ functions of the optimal policy are fixed points of some operators derived from Bellman's equation. Actor-critic policy iteration algorithms aim to learn a policy in an iterative way, where at iteration $t$, the ``critic'' estimates $Q^{\pi_t}$ and based on this, the ``actor'' improves the policy. 

Despite the mathematical elegancy of MDPs and the conveniency of switching to the $Q$ function representation, there are several limitations of this approach. First, as noted in \cite{kober2013reinforcement}, usually in robotics, we may only be able to find some approximate notion of a Markovian behaving state. Furthermore, the transition of states depends not only on the agent's action, but also on actions of other players in the environment. For example, in the context of autonomous driving,  while the dynamic of the autonomous vehicle is clearly Markovian, the next state depends on the behavior of the other road users (vehicles, pedestrians, cyclists), which is not necessarily Markovian. One possible solution to this problem is to use partially observed MDPs~\cite{white1991survey}, in which we still assume that there is a Markovian state, but we only get to see an observation that is distributed according to the hidden state. A more direct approach considers game theoretical generalizations of MDPs, for example the Stochastic Games framework. Indeed, some of the algorithms for MDPs were generalized to multi-agents games. For example, the minimax-Q learning \cite{littman1994markov} or the Nash-Q learning \cite{hu2003nash}. Other approaches to Stochastic Games are explicit modeling of the other players, that goes back to Brown's fictitious play~\cite{brown1951iterative}, and vanishing regret learning algorithms \cite{hart2000simple,CesaBianchiLu06}.  See also \cite{uther1997adversarial, Thrun95a,kearns2002near,brafman2003r}. As noted in \cite{shoham2007if}, learning in multi-agent setting is inherently more complex than in the single agent setting. Taken together, in the context of autonomous driving, given the unpredictable behavior of other road users, the MDP framework and its extensions are problematic in the least and could yield impractical RL algorithms. 

When it comes to categories of RL algorithms and how they handle the Markov assumption, we can divide them into four groups: 
\begin{list}{\labelitemi}{\leftmargin=0.5em\itemsep=0.2pt\parsep=0pt}
\item Algorithms that estimate the Value or Q function -- those clearly are defined solely in the context of MDP.
\item Policy based learning methods where, for example, the gradient of the policy $\pi$ is 
estimated using the likelihood ratio trick (cf. \cite{Aleksandrov-etal68,Sutton-etal99,Peters-Schaal08}) and thereby the learning of $\pi$ is an iterative process where at each iteration the agent interacts with the environment while acting based on the current Policy estimation. Policy gradient methods are derived using the Markov assumption, but we will see later that this is not necessarily required. 
\item Algorithms that learn the dynamics of the process, namely, the function that takes $(s_t,a_t)$ and yields a distribution over the next state $s_{t+1}$. Those are known as Model-based methods, and those clearly rely on the Markov assumption.
\item Behavior cloning (Imitation) methods. The Imitation approach simply requires a training set of examples 
of the form $(s_t,a_t)$, where  $a_t$ is the action of the human driver (cf. \cite{Nvidia}). One can then use supervised learning to learn a policy $\pi$ such that $\pi(s_t) \approx a_t$. Clearly there is no Markov assumption involved in the process. The problem with Imitation is that different human drivers, and even the same human, are not deterministic in their policy choices. Hence, learning a function for which $\|\pi(s_t)-a_t\|$ is very small is often infeasible. And, once we have small errors, they might accumulate over time and yield large errors. 
\end{list}
Our first observation (detailed in sec.~\ref{sec:markov}) is that Policy Gradient does not really require the Markov assumption and furthermore that some methods for reducing the variance of the gradient estimator (cf. \cite{schulman2015high}) would not require Markov assumptions as well. Taken together, the RL algorithm could be initialized through Imitation and then updated using an iterative Policy Gradient approach without the Markov assumption. 

The second contribution of the paper is a method for guaranteeing functional safety of the Driving Policy outcome. Given the very small probability $p \ll 1$ of an accident, the corresponding reward of a trajectory leading to an accident should be much smaller than $-1/p$, thus generating a very high variance of the gradient estimator (see Lemma~\ref{lemma:safe}). Regardless of the means of reducing variance, as detailed in sec.~\ref{sec:markov}, the variance of the gradient not only depends on the behavior of the reward but also on the horizon (time steps) required for making decisions. Our proposal for functional safety is twofold. First we decompose the Policy function into a composition of a Policy for \texttt{Desires} (which is to be learned) and trajectory planning with hard constraints (which is not learned). The goal of \texttt{Desires} is to enable comfort of driving, while hard constraints guarantees the safety of driving (detailed in sec.~\ref{sec:safe}). Second, following the options mechanism of \cite{sutton1999between} we employ a hierarchical temporal abstraction we call an ``Option Graph'' with a gating mechanism that significantly reduces the effective horizon and thereby reducing the variance of the gradient estimation even further (detailed in sec.~\ref{sec:option}). The Option Graph plays a similar role to ``structured prediction'' in supervised learning (e.g. \cite{taskar2005learning}), thereby reducing sample complexity, while also playing a similar role to LSTM \cite{hochreiter1997long} gating mechanisms used in supervised deep networks. The use of options for skill reuse was also been recently studied in \cite{tessler2016deep}, where hierarchical deep Q networks for skill reuse have been proposed.  Finally, in sec.~\ref{sec:exp}, we demonstrate the application of our algorithm on a double merging maneuver which is notoriously difficult to execute using conventional motion and path planning approaches.

Safe reinforcement learning was also studied recently in \cite{ammar2015safe}. Their approach involves first optimizing the expected reward (using policy gradient) and then applying a (Bregman) projection of the solution onto a set of linear constraints. This approach is different from our approach. In particular, it assumes that the hard constraints on safety can be expressed as linear constraints on the parameter vector $\theta$.  In our case $\theta$ are the weights of a deep network and the hard constraints involve highly non-linear dependency on $\theta$. Therefore, convex-based approaches are not applicable to our problem.

\section{Reinforcement Learning without Markovian Assumption}
\label{sec:markov}

We begin with setting up the RL notations geared towards deriving a Policy Gradient method with variance reduction while not making any Markov assumptions. We follow the REINFORCE \cite{williams1992simple} likelihood ratio trick and make a very modest contribution --- more an observation than a novel derivation --- that Markov assumptions on the environment are not required. Let $\cal S$ be our state space which contains the ``environmental model'' around the vehicle generated from interpreting sensory information and any additional useful information such as the kinematics of moving objects from previous frames. We use the term ``state space'' in order not to introduce new terminology but we actually mean a state vector in an agnostic sense, without the Markov assumptions --- simply a collection of information around the vehicle generated at a particular time stamp. Let $\cal A$ denote the action space, where at this point we will keep it abstract and later in sec.~\ref{sec:option} we will introduce a specific discrete action space for selecting ``desires'' tailored to the domain of autonomous driving. The hypothesis class of parametric stochastic policies is denoted by $\{\pi_\theta : \theta
 \in \Theta\}$, where for all $\theta,s$ we have $\sum_a\pi_\theta(a|s) = 1$, and we assume that $\pi_\theta(a|s)$ is differentiable w.r.t. $\theta$. Note that we have chosen a class of policies $\pi_\theta:{\cal S} \times {\cal A} \to [0,1]$ as part of an architectural design choice, i.e., that the (distribution over) action $a_t$ at time $t$ is determined by the agnostic state $s_t$ and in particular, given the differentiability over $\theta$, the policy $\pi_\theta$ is implemented by a deep layered network. In other words, we are not claiming that the optimal policy is necessarily contained in the hypothesis class but that ``good enough'' policies can be modeled using a deep network whose input layer consists of $s_t$. The theory below does not depend on the nature of the hypothesis class and any other design choices can be substituted --- for example, $\pi_\theta(a_t| s_t, s_{t-1})$ would correspond to the class of recurrent neural networks (RNN).

Let $\bar{s} = ((s_1,a_1),\ldots,(s_T,a_T))$ define a sequence (trajectory) of state-action over a time period sufficient for long-term planning, and let $\bar{s}_{i:j} = ((s_i,a_i),\ldots,(s_j,a_j))$ denote a sub-trajectory from time stamp $i$ to time stamp $j$. Let $P_\theta(\bar{s})$ be the probability of trajectory $\bar{s}$ when actions
  are chosen according to the policy $\pi_\theta$ and there are \alert{no other assumptions on the environment}. The total reward associated with the trajectory $\bar s$ is denoted by $R(\bar{s})$ which can be any function of $\bar{s}$. For example, $R$ can be a function of the immediate rewards, $r_1,\ldots,r_T$, such as $R(\bar{s})=\sum_t r_t$ or the discounted reward $R(\bar{s})=\sum_t \gamma^t r_t$ for $\gamma\in (0,1]$. But, any reward function of $\bar s$ can be used and therefore we can keep it abstract. 
Finally, the learning problem is:
\[
\argmax_{\theta \in \Theta}  \E_{\bar{s} \sim P_\theta}
[R(\bar{s})] 
\]

The gradient policy theorem below follows the standard likelihood ratio trick (e.g., \cite{Aleksandrov-etal68,Glynn87}) and the formula is well known, but in the proof (which follows the proof in \cite{Peters-Schaal08}), we make the observation that Markov assumptions on the environment are not required for the validity of the policy gradient estimator:
\begin{theorem} \label{thm:pg}
Denote 
\begin{equation}
\hat{\nabla}(\bar{s}) = R(\bar{s})
\sum_{t=1}^T \nabla_\theta \log(\pi_\theta(a_t|s_t))
\label{eq:pg}
\end{equation}
Then, 
$\E_{\bar{s} \sim P_\theta} \hat{\nabla}(\bar{s}) = \nabla \E_{\bar{s} \sim P_\theta}
[R(\bar{s})] $.
\end{theorem}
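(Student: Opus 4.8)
The plan is to run the standard likelihood-ratio (REINFORCE) argument of \cite{williams1992simple,Peters-Schaal08}, but to keep the factorization of the trajectory density fully general and observe that the only $\theta$-dependent factors are the policy terms $\pi_\theta(a_t\mid s_t)$. First I would write the objective as an integral over trajectories, $\E_{\bar s \sim P_\theta}[R(\bar s)] = \int R(\bar s)\, P_\theta(\bar s)\, d\bar s$ (a finite sum in the discrete finite-horizon instantiation actually used in the paper), and move the gradient inside: $\nabla \E_{\bar s \sim P_\theta}[R(\bar s)] = \int R(\bar s)\, \nabla_\theta P_\theta(\bar s)\, d\bar s$. Then I would apply the log-derivative identity $\nabla_\theta P_\theta(\bar s) = P_\theta(\bar s)\,\nabla_\theta \log P_\theta(\bar s)$ to rewrite the right-hand side as $\E_{\bar s \sim P_\theta}\big[R(\bar s)\,\nabla_\theta \log P_\theta(\bar s)\big]$.

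The heart of the argument, and the only place it departs from the usual MDP derivation, is the factorization of $P_\theta(\bar s)$. By the chain rule of probability — valid for \emph{any} stochastic process, with no Markov assumption —
\[
P_\theta(\bar s) = \prod_{t=1}^T P_{\mathrm{env}}(s_t \mid \bar s_{1:t-1})\,\cdot\,\pi_\theta(a_t \mid s_t),
\]
where $P_{\mathrm{env}}(s_t \mid \bar s_{1:t-1})$ is the arbitrary, history-dependent law of the next state given the entire past (with $\bar s_{1:0}$ the empty history, so the $t=1$ factor is the initial-state law). The key observation to state explicitly is that the environment reacts only to the \emph{realized} state-action history and not to the parameter vector that produced those actions, so each $P_{\mathrm{env}}(s_t \mid \bar s_{1:t-1})$ is independent of $\theta$; likewise the architectural choice $\pi_\theta : {\cal S}\times{\cal A}\to[0,1]$ gives $P_\theta(a_t\mid \bar s_{1:t-1},s_t)=\pi_\theta(a_t\mid s_t)$. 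Taking $\nabla_\theta \log$ of the product, the environment terms vanish and $\nabla_\theta \log P_\theta(\bar s) = \sum_{t=1}^T \nabla_\theta \log \pi_\theta(a_t\mid s_t)$. Substituting this into $\E_{\bar s \sim P_\theta}[R(\bar s)\,\nabla_\theta \log P_\theta(\bar s)]$ gives exactly $\E_{\bar s \sim P_\theta}[\hat\nabla(\bar s)]$, which is the claim. (The same computation, with $\pi_\theta(a_t\mid \bar s_{1:t-1},s_t)$ in place of $\pi_\theta(a_t\mid s_t)$, covers recurrent policies, as the paper remarks.)

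I expect the main obstacle to be bookkeeping rather than mathematics: (i) justifying the interchange of $\nabla_\theta$ and the integral when the horizon or state space is not finite — this needs a dominated-convergence hypothesis, e.g. $R$ bounded and $\nabla_\theta \pi_\theta(a\mid s)$ uniformly dominated, and I would simply state such a regularity assumption; and (ii) making the conditional densities $P_{\mathrm{env}}(s_t \mid \bar s_{1:t-1})$ well defined for a general (possibly continuous) state space, which is routine measure theory. In the discrete, finite-horizon setting that the rest of the paper actually relies on, both issues disappear and the proof collapses to the three displayed manipulations above; I would present the clean finite version as the main line and relegate the regularity caveats to a remark.
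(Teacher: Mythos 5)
Your proposal is correct and follows essentially the same route as the paper: the likelihood-ratio trick combined with the chain-rule factorization $P_\theta(\bar s)=\prod_t \prob[s_t\mid\bar s_{1:t-1}]\,\pi_\theta(a_t\mid s_t)$, with the key observation that the history-dependent environment factors carry no $\theta$-dependence and hence drop out of $\nabla_\theta\log P_\theta(\bar s)$. Your added remark on interchanging $\nabla_\theta$ with the integral is a reasonable regularity caveat that the paper leaves implicit, but it does not change the argument.
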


The gradient policy theorem shows that it is possible to obtain an unbiased estimate of the gradient of the expected total reward \cite{williams1992simple,Sutton-etal99, greensmith2004variance}, thereby using noisy gradient estimates in a stochastic gradient ascent/descent (SGD) algorithm for training a deep network representing the policy $\pi_\theta$. Unfortunately, the variance of the gradient estimator scales unfavorably with the time horizon $T$ and moreover due to the very low probability of critical ``corner'' cases, such as the probability of an accident $p$, the immediate reward $-r$ must satisfy $r\gg 1/p$ and in turn the variance of the random variable $R(\bar{s})$ grows with $pr^2$, i.e., much larger than $1/p$ (see sec.~\ref{sec:safe} and Lemma~\ref{lemma:safe}).
High variance of the gradient has a detrimental effect on the convergence rate of SGD \cite{moulines2011non,
  shalev2013stochastic, johnson2013accelerating, shalev2016sdca,
  needell2014stochastic} and given the nature of our problem domain, with extremely low-probability corner cases, the effect of an extremely high variance could bring about bad policy solutions.

We approach the variance problem along three thrusts. First, we use base-line subtraction methods (which goes back to \cite{williams1992simple}) for variance reduction. Second, we deal with the variance due to ``corner'' cases by decomposing the policy into a learnable part and a non-learnable part, the latter induces hard constraints on functional safety.  Last, we introduce a temporal abstraction method with a gating mechanism we call an ``option graph'' to ameliorate the effect of the time horizon $T$ on the variance. In \secref{sec:variance} we focus on base-line subtraction, derive the optimal baseline (following \cite{Peters-Schaal08}) and generalize the recent results of \cite{schulman2015high} to a non-Markovian setting. In the next section we deal with variance due to ``corner cases''.

\shortpaper{}{\section{Variance Reduction} \label{sec:variance} Consider again policy gradient estimate $\hat{\nabla}(\bar{s})$ introduced in eqn.\ref{eq:pg}. The baseline subtraction method reduces the variance of $R(\bar{s})$ by subtracting a scalar $b_{t,i}$ from $R(\bar{s})$: 
\[
\hat{\nabla}_i(\bar{s}) =  \sum_{t=1}^T (R(\bar{s})-b_{t,i})\nabla_{\theta_i} \log(\pi(a_t|s_t)) ~.
\]
The Lemma below describes the conditions on $b_{t,i}$ for the baseline subtraction to work:

\begin{lemma} \label{lem:Bt}
For every $t$ and $i$, let $b_{t,i}$ be a scalar that does not depend on $a_t,\bar{s}_{t+1:T}$, but may depend on 
$\bar{s}_{1:t-1}, s_t,$ and on $\theta$. Then, 
\[
\E_{\bar{s} \sim P_\theta} \left[ \sum_t b_{t,i} \nabla_{\theta_i} \log
  \pi_\theta(a_t|s_t) \right] = 0 ~.
\]
\end{lemma}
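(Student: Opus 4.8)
The plan is to reduce to a single time index by linearity of expectation, and then kill each term by conditioning on exactly the information that $b_{t,i}$ is permitted to see, namely $\bar{s}_{1:t-1}$ and $s_t$. Since $\E_{\bar s\sim P_\theta}\big[\sum_t b_{t,i}\nabla_{\theta_i}\log\pi_\theta(a_t|s_t)\big]=\sum_t \E_{\bar s\sim P_\theta}\big[ b_{t,i}\nabla_{\theta_i}\log\pi_\theta(a_t|s_t)\big]$ (a finite sum over $t\le T$), it suffices to show that each summand is zero. Fixing $t$ and applying the tower rule,
\[
\E_{\bar s\sim P_\theta}\!\left[b_{t,i}\,\nabla_{\theta_i}\log\pi_\theta(a_t|s_t)\right]
= \E\!\left[\,b_{t,i}\;\E\!\left[\nabla_{\theta_i}\log\pi_\theta(a_t|s_t)\,\middle|\,\bar{s}_{1:t-1},s_t\right]\right],
\]
where $b_{t,i}$ moves out of the inner conditional expectation because, by assumption, it is a deterministic function of $\bar{s}_{1:t-1}$, $s_t$ and $\theta$ (with $\theta$ held fixed here).

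The key step is to identify the conditional law of $a_t$ given $(\bar{s}_{1:t-1},s_t)$. Writing the trajectory probability in causal (chain-rule) form,
\[
P_\theta(\bar{s}) = P(s_1)\left(\prod_{\tau=1}^{T}\pi_\theta(a_\tau|s_\tau)\right)\left(\prod_{\tau=1}^{T-1}P\bigl(s_{\tau+1}\mid\bar{s}_{1:\tau}\bigr)\right),
\]
where the environment factors $P(s_{\tau+1}\mid\bar{s}_{1:\tau})$ are completely arbitrary (this is exactly where non-Markovianity does no harm) and, crucially, do not depend on $\theta$. Marginalizing $\bar{s}_{t+1:T}$ out of this expression shows that the conditional probability of $a_t$ given $(\bar{s}_{1:t-1},s_t)$ is precisely $\pi_\theta(a_t|s_t)$ --- the same observation that underlies \thmref{thm:pg}. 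Hence the inner expectation equals
\[
\sum_{a\in\mathcal{A}}\pi_\theta(a|s_t)\,\nabla_{\theta_i}\log\pi_\theta(a|s_t)
= \sum_{a\in\mathcal{A}}\nabla_{\theta_i}\pi_\theta(a|s_t)
= \nabla_{\theta_i}\sum_{a\in\mathcal{A}}\pi_\theta(a|s_t)
= \nabla_{\theta_i}\,1 = 0,
\]
using $\nabla\log f=\nabla f/f$, the interchange of the finite sum over $\mathcal{A}$ with $\nabla_{\theta_i}$, and $\sum_a\pi_\theta(a|s)=1$. Plugging back in, each summand is $\E[\,b_{t,i}\cdot 0\,]=0$, and summing over $t$ gives the claim.

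I expect the only genuinely delicate point to be the middle step: justifying that, in the fully non-Markovian model, conditioning on $(\bar{s}_{1:t-1},s_t)$ and integrating out everything after time $t$ leaves $a_t$ distributed exactly as $\pi_\theta(\cdot\,|\,s_t)$. This is an immediate consequence of the architectural assumption (stated just before \thmref{thm:pg}) that the action at time $t$ is sampled from $\pi_\theta(\cdot\,|\,s_t)$ irrespective of how the environment behaves, and it is precisely what allows the baseline $b_{t,i}$ to depend on anything in the strict past together with the current state $s_t$, but not on $a_t$ or on the future. Everything else is bookkeeping: with the discrete action space $\mathcal{A}$ used here, interchanging $\nabla_{\theta_i}$ with $\sum_{a\in\mathcal{A}}$ is automatic; for a continuous action space one would instead invoke dominated convergence / Leibniz's rule, which needs only mild regularity of $\pi_\theta$.
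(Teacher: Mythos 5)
Your proof is correct and follows essentially the same route as the paper's: both reduce to a single time index, factor the trajectory law as $P_\theta(\bar{s}_{1:t-1},s_t)\,\pi_\theta(a_t|s_t)\,P_\theta(\bar{s}_{t+1:T}|\bar{s}_{1:t})$ so that the future marginalizes to $1$ and $b_{t,i}$ factors out, and then finish with the score-function identity $\sum_a \pi_\theta(a|s)\nabla_{\theta_i}\log\pi_\theta(a|s)=0$ (the paper's Lemma~\ref{lem:variance_lem}). The only cosmetic difference is that you phrase the marginalization via the tower rule and conditional expectations where the paper writes out the sums explicitly.
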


The optimal baseline, one that would reduce the variance the most, can be derived following \cite{Peters-Schaal08}:
\[
\E \hat{\nabla}_i(\bar{s})^2 = \E \left( \sum_{t=1}^T (R(\bar{s})-b_{t,i}) \nabla_{\theta_i} \log(\pi(a_t|s_t)) \right)^2
\]
Taking the derivative w.r.t. $b_{\tau,i}$ and comparing to zero we obtain the following equation for the optimal baseline:
\[
\E \left( \sum_{t=1}^T (R(\bar{s})-b_{t,i}) \nabla_{\theta_i} \log(\pi(a_t|s_t)) \right) \left( \nabla_{\theta_i} \log(\pi(a_\tau|s_\tau)) \right) = 0 ~.
\]
This can be written as $X b = y$, where $X$ is a $T \times T$ matrix with $X_{\tau,t} = \E\, \nabla_{\theta_i} \log(\pi(a_t|s_t)) \nabla_{\theta_i} \log(\pi(a_\tau|s_\tau)) $ and $y$ is a $T$ dimensional vector with $y_t = \E R(\bar{s}) \nabla_{\theta_i} \log(\pi(a_t|s_t)) \nabla_{\theta_i} \log(\pi(a_\tau|s_\tau))$. We can estimate $X$ and $y$ from a mini-batch of episodes and then set $b_{\cdot,i}$ to be $X^{-1} y$. 
A more efficient approach is to think about the problem of finding the baseline as an online linear regression problem and have a separate process that update $b_{\cdot,i}$ in an online manner~\cite{shalev2011online}.

Many policy gradient variants \cite{schulman2015high} replace $R(\bar{s})$ with the Q-function, which assumes the Markovian setting.  The following lemma gives a non-Markovian analogue of the $Q$ function.
\begin{lemma} \label{lem:Qestimator} Define
\begin{equation} \label{eqn:Qdef}
Q_\theta(\bar{s}_{1:t}) := \sum_{\bar{s}_{t+1:T}} P_\theta( \bar{s}_{t+1:T}|\bar{s}_{1:t}) \,R(\bar{s}) ~.
\end{equation} 
 Let $\xi$ be a random variable and let 
$\hat{Q}_\theta(\bar{s}_{1:t},\xi)$ be a function such that $\E_{\xi} \hat{Q}_\theta(\bar{s}_{1:t},\xi) =  Q_\theta(\bar{s}_{1:t})$ (in particular, we can take $\xi$ to be empty and then $\hat{Q} \equiv Q$). 
Then,
\[
\E_{\bar{s} \sim P_\theta, \xi}\left[ \sum_{t=1}^T \hat{Q}_\theta(\bar{s}_{1:t},\xi) \nabla_\theta \log(\pi_\theta(a_t|s_t)) \right] = \nabla \E_{\bar{s} \sim P_\theta}
[R(\bar{s})] ~.
\]
\end{lemma}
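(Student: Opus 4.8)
The plan is to prove the identity by two reduction steps: first strip off the auxiliary randomization $\xi$, then invoke Theorem~\ref{thm:pg} and the law of total expectation. No new machinery is needed beyond careful bookkeeping of which quantities are measurable with respect to which prefix of the trajectory $\bar s$.

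First I would dispose of the randomized estimator $\hat Q$ and reduce to the case $\hat Q \equiv Q$. Condition on $\bar s$ and average over $\xi$ inside the sum: each factor $\nabla_\theta\log\pi_\theta(a_t|s_t)$ is a function of $\bar s_{1:t}$ alone (since $\bar s_{1:t}$ records both $s_t$ and $a_t$), hence it is constant given $\bar s$ and pulls out of the $\xi$-average. The hypothesis $\E_\xi \hat Q_\theta(\bar s_{1:t},\xi) = Q_\theta(\bar s_{1:t})$ then collapses the left-hand side to $\E_{\bar s\sim P_\theta}\big[\sum_{t=1}^T Q_\theta(\bar s_{1:t})\,\nabla_\theta\log\pi_\theta(a_t|s_t)\big]$. (If $\xi$ is not assumed independent of $\bar s$, one reads the hypothesis as $\E[\hat Q_\theta(\bar s_{1:t},\xi)\mid \bar s_{1:t}] = Q_\theta(\bar s_{1:t})$ and the same step goes through.)

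Second, by Theorem~\ref{thm:pg} the right-hand side equals $\E_{\bar s\sim P_\theta}\big[R(\bar s)\sum_{t=1}^T \nabla_\theta\log\pi_\theta(a_t|s_t)\big] = \sum_{t=1}^T \E_{\bar s}\big[R(\bar s)\,\nabla_\theta\log\pi_\theta(a_t|s_t)\big]$, so it suffices to show, term by term, that $\E_{\bar s}[R(\bar s)\,\nabla_\theta\log\pi_\theta(a_t|s_t)] = \E_{\bar s}[Q_\theta(\bar s_{1:t})\,\nabla_\theta\log\pi_\theta(a_t|s_t)]$. For this I would condition on the prefix $\bar s_{1:t}$: again $\nabla_\theta\log\pi_\theta(a_t|s_t)$ is $\bar s_{1:t}$-measurable and pulls out of the inner expectation, while $\E[R(\bar s)\mid \bar s_{1:t}] = \sum_{\bar s_{t+1:T}} P_\theta(\bar s_{t+1:T}\mid\bar s_{1:t})\,R(\bar s) = Q_\theta(\bar s_{1:t})$ by the definition~(\ref{eqn:Qdef}). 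Summing over $t$ and combining with the first step finishes the proof.

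The only delicate point — and the place where the non-Markovian generality actually bites — is this measurability step: unlike the classical Markovian $Q(s_t,a_t)$, here $Q_\theta$ must be allowed to depend on the entire history $\bar s_{1:t}$, because without a Markov assumption $\E[R(\bar s)\mid s_t,a_t]$ need not equal $\E[R(\bar s)\mid\bar s_{1:t}]$; conditioning on the full prefix is exactly what makes the score factor split off cleanly. I would also note that all interchanges of $\nabla$ with $\E$ are inherited from Theorem~\ref{thm:pg}, so no further regularity hypotheses are required.
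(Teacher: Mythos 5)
Your proof is correct and is essentially the paper's argument: both reduce $\hat Q$ to $Q$ via the unbiasedness hypothesis, use that $\E[R(\bar s)\mid \bar s_{1:t}] = Q_\theta(\bar s_{1:t})$ (the paper writes this as marginalizing the suffix sum $\sum_{\bar s_{t+1:T}} P_\theta(\bar s_{t+1:T}\mid\bar s_{1:t})$ explicitly), and then invoke Theorem~\ref{thm:pg}. The only cosmetic difference is that you work from both sides and meet in the middle via the tower property, whereas the paper transforms the left-hand side in a single chain of equalities.
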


Observe that the following analogue of the value function for the non-Markovian setting,
\begin{equation} \label{eqn:nmVdef}
V_\theta(\bar{s}_{1:t-1},s_t) = \sum_{a_t} \pi_\theta(a_t|s_t) \, Q_\theta(\bar{s}_{1:t}) ~,
\end{equation}
satisfies the conditions of Lemma~\ref{lem:Bt}. Therefore, we can also replace $R(\bar{s})$ with an analogue of the so-called Advantage function, 
$A(\bar{s}_{1:t}) = Q_\theta(\bar{s}_{1:t})  - V_\theta(\bar{s}_{1:t-1},s_t)$. 
The advantage function, and generalization of it, are often used in actor-critique policy gradient implementations (see for example \cite{schulman2015high}). In the 
non-Markovian setting considered in this paper, the Advantage function is more complicated to estimate, and therefore, in our experiments, we use estimators that involve the term $R(\bar{s})-b_{t,i}$, where $b_{t,i}$ is estimated using online linear regression.  

}

\section{Safe Reinforcement Learning}
\label{sec:safe}

In the previous section we have shown how to optimize the reinforcement learning objective by policy stochastic gradient ascent. Recall that we have defined the objective to be $\E_{\bar{s} \sim P_\theta} R(\bar{s})$, that is, the \emph{expected} reward. Objectives that involve expectation are common in machine learning. We now argue that this objective poses a functional safety problem.

Consider a reward function for which $R(\bar{s})=-r$ for trajectories that represent a rare ``corner'' event which we would like to avoid, such as an accident, and $R(\bar{s})\in[-1,1]$ for the rest of the trajectories. For concreteness, suppose that our goal is to learn to perform an overtake maneuver. Normally, in an accident free trajectory, $R(\bar{s})$ would reward successful, smooth, takeovers and penalize against staying in lane without completing the takeover --- hence the range $[-1,1]$. If a sequence, $\bar{s}$, represents an accident, we would like the reward $-r$ to provide a sufficiently high penalty to discourage such occurrences. The question is what should be the value of $r$ to ensure accident-free driving?

Observe that  the effect of an accident on $\E[R(\bar{s})]$ is the additive term $-pr$, where $p$ is the probability mass of trajectories with an accident event. If this term is negligible, i.e., $p \ll 1/r$, then the learner might prefer a policy that performs an accident (or adopt in general a reckless driving policy) in order to fulfill the takeover maneuver successfully more often than a policy that would be more defensive at the expense of having some takeover maneuvers not completed successfully.  
In other words, if we want to make sure that the probability of accidents is at most $p$ then we must set $r \gg 1/p$. Since we would like $p$ to be extremely small (say, $p=10^{-9}$), we obtain that $r$ must be extremely large. Recall that in policy gradient we estimate the gradient of $\E[R(\bar{s})]$. The following lemma shows that the variance of the random variable $R(\bar{s})$ grows with $pr^2$, which is larger than $r$ for $r \gg 1/p$. Hence, even estimating the objective is difficult, let alone its gradient.
\begin{lemma}
\label{lemma:safe}
Let $\pi_\theta$ be a policy and let $p,r$ be scalars such that with probability $p$ we have $R(\bar{s}) = -r$ and with probability $1-p$ we have $R(\bar{s}) \in [-1,1]$. Then,
\[
\var[R(\bar{s})] \ge pr^2 - (pr+(1-p))^2
= (p-p^2)r^2 -2p(1-p)r - (1-p)^2 \approx pr^2 ~,
\]
where the last approximation holds for the case $r \ge 1/p$.
\end{lemma}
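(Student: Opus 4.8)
The plan is to compute the variance directly from the identity $\var[R(\bar s)] = \E[R(\bar s)^2] - (\E[R(\bar s)])^2$ and to bound the two moments separately, using only the two facts we are handed: that $R(\bar s) = -r$ on some event $E$ with $\prob(E) = p$, and that $R(\bar s) \in [-1,1]$ on the complement $E^c$. No Markov or policy structure is needed; everything follows from these constraints on the distribution of the scalar $R(\bar s)$.

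First I would lower-bound the second moment. Splitting the expectation over $E$ and $E^c$,
\[
\E[R(\bar s)^2] = p\,r^2 + (1-p)\,\E[R(\bar s)^2 \mid E^c] \ \ge\ p\,r^2,
\]
since $R(\bar s)^2 \ge 0$ always, so the conditional second moment on $E^c$ is nonnegative. Next I would upper-bound the mean in absolute value. Again splitting over $E$ and $E^c$,
\[
\E[R(\bar s)] = -p\,r + (1-p)\,\E[R(\bar s)\mid E^c],
\]
and because $R(\bar s) \in [-1,1]$ on $E^c$ we have $|\E[R(\bar s)\mid E^c]| \le 1$, so by the triangle inequality $|\E[R(\bar s)]| \le p\,r + (1-p)$, hence $(\E[R(\bar s)])^2 \le (pr + (1-p))^2$. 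Subtracting this from the second-moment bound gives $\var[R(\bar s)] \ge pr^2 - (pr+(1-p))^2$, which is the asserted inequality.

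The remaining two displayed equalities/approximations are pure algebra. Expanding $(pr + (1-p))^2 = p^2 r^2 + 2p(1-p)r + (1-p)^2$ yields
\[
pr^2 - (pr+(1-p))^2 = (p-p^2)r^2 - 2p(1-p)r - (1-p)^2,
\]
which is the middle expression in the lemma. For the final approximation, suppose $r \ge 1/p$. Then the quadratic term $(p-p^2)r^2 = p(1-p)r^2$ dominates: the linear term $2p(1-p)r$ is a factor of roughly $r \ge 1/p$ smaller, and the constant $(1-p)^2 = O(1)$ is negligible next to $pr^2 \ge 1/p$; moreover $p-p^2 = p(1-p) \approx p$ when $p \ll 1$. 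Hence the whole expression is $\approx pr^2$, as claimed.

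I do not expect a genuine obstacle here — the lemma is essentially a one-line second-moment estimate. The only point deserving care is the sign bookkeeping when bounding the mean: the accident event contributes $-pr < 0$ while the benign trajectories can contribute up to $+(1-p)$, so one must bound the \emph{absolute value} of $\E[R(\bar s)]$ by the sum $pr + (1-p)$, rather than cancelling the terms; after that the inequality $\var = \E[R^2] - (\E R)^2$ does all the work.
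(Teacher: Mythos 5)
Your proposal is correct and follows essentially the same route as the paper: lower-bound $\E[R(\bar s)^2]$ by $pr^2$, upper-bound $(\E[R(\bar s)])^2$ by $(pr+(1-p))^2$ using the interval containing the mean, and subtract. The sign bookkeeping you highlight is exactly the point the paper handles by noting $\E[R(\bar s)] \in [-pr-(1-p),\,-pr+(1-p)]$.
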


The above discussion shows that an objective of the form $\E[R(\bar{s})]$ cannot ensure functional safety without causing a serious variance problem. The baseline subtraction method for variance reduction would not offer a sufficient remedy to the problem because we would be shifting the problem from very high variance of $R(\bar{s})$ to equally high variance of the baseline constants whose estimation would equally suffer numerical instabilities. Moreover, if the probability of an accident is $p$ then on average we should sample at least $1/p$ sequences before obtaining an accident event. This immediately implies a lower bound of $1/p$ samples of sequences for any learning algorithm that aims at minimizing $\E[R(\bar{s})]$. We therefore face a fundamental problem whose solution must be found in a new architectural design and formalism of the system rather than through numerical conditioning tricks. 

Our approach is based on the notion that hard constraints should be injected \emph{outside} of the learning framework. In other words, we decompose  the policy function into a learnable part and a non-learnable part. Formally, we structure the policy function as $\pi_\theta = \pi^{(T)} \circ \pi_\theta^{(D)}$, where $\pi_\theta^{(D)}$ maps the (agnostic) state space into a set of \texttt{Desires}, while $\pi^{(T)}$ maps the \texttt{Desires} into a trajectory (which determines how the car should move in a short range). The function $\pi_\theta^{(D)}$ is responsible for the comfort of driving and for making strategical decisions such as which other cars should be over-taken or given way and what is the desired position of the host car within its lane and so forth. The mapping from state to \texttt{Desires} is a policy $\pi_\theta^{(D)}$ that is being learned from experience by maximizing an expected reward. The desires produced by $\pi_\theta^{(D)}$  are translated into a cost function over driving trajectories. The function $\pi^{(T)}$, which is not being learned, is implemented by finding a trajectory that minimizes the aforementioned cost subject to hard constraints on functional safety. This decomposition allows us to always ensure functional safety while at the same time enjoying comfort driving most of the time. 

To illustrate the idea, let us consider a challenging driving scenario, which we call the \emph{double merge} scenario (see \figref{fig:doublemerge} for an illustration). In a double merge, vehicles approach the merge area from both left and right sides and, from each side, a vehicle can decide whether to merge into the other side or not. Successfully executing a  double merge in busy traffic requires significant negotiation skills and experience and is notoriously difficult to execute in a heuristic or brute force approach by enumerating all possible trajectories that could be taken by all agents in the scene.

\begin{wrapfigure}{r}{0.3\textwidth}
\begin{center}
\includegraphics[height=0.15\textheight]{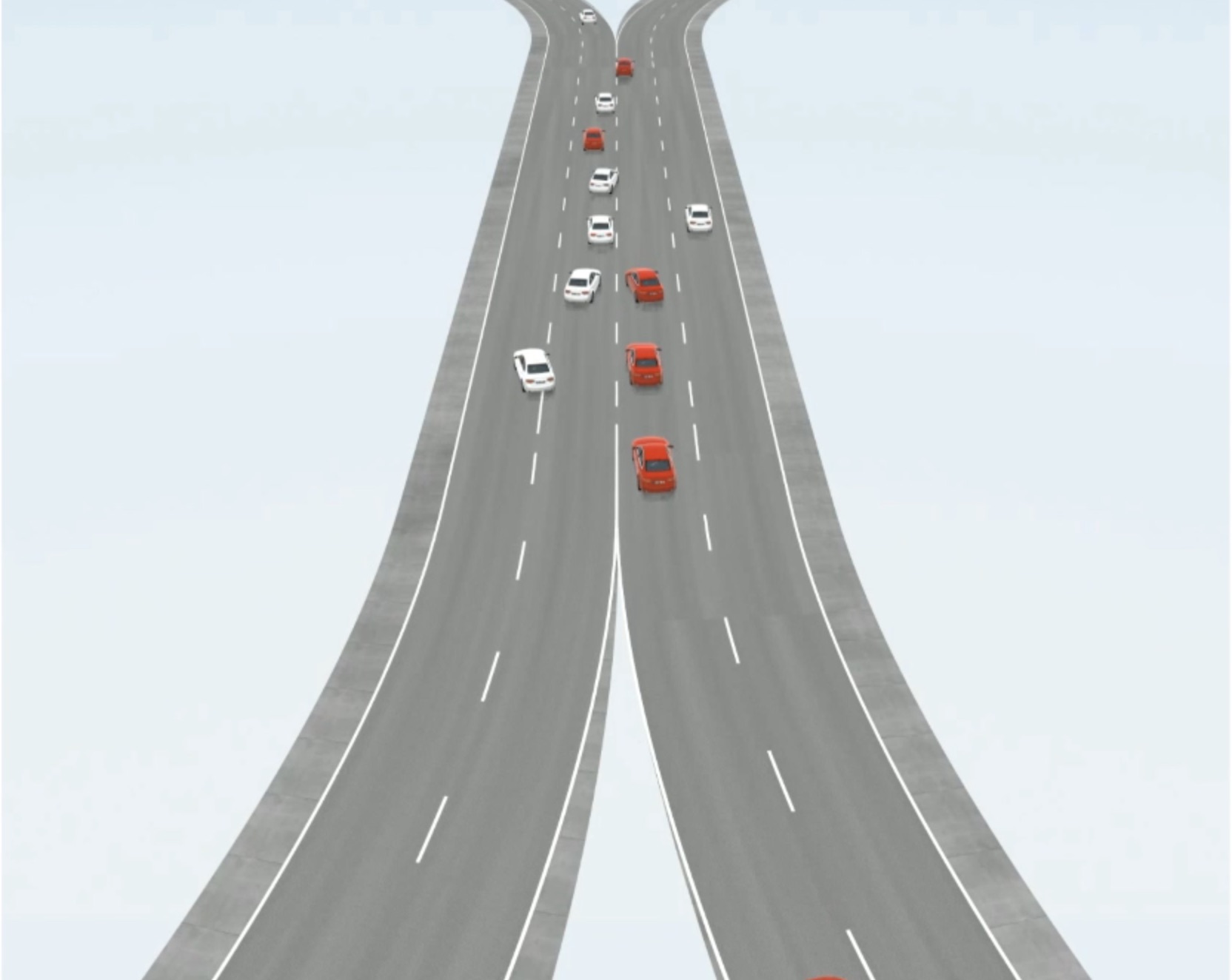}
\end{center}
\caption{\small The double merge scenario. Vehicles arrive from the left or right side to the merge area. Some vehicles should continue on their road while other vehicles should merge to the other side. In dense traffic, vehicles must negotiate the right of way.} \label{fig:doublemerge}
\end{wrapfigure}
We begin by defining the set of \texttt{Desires} $\cal D$ appropriate for the double merge maneuver. Let $\cal D$ be the Cartesian product of the following sets:
$${\cal D} = [0,v_{\max}] \times L \times \{g,t,o\}^n$$
where $[0,v_{\max}]$ is the desired target speed of the host vehicle, $L=\{1,1.5,2,2.5,3,3.5,4\}$ is the desired lateral position in lanes units where whole numbers designate lane center and fraction numbers designate lane boundaries, and $\{g,t,o\}$ are classification labels assigned to each of the $n$ other vehicles. Each of the other vehicles is assigned `g' if the host vehicle is to ``give way'' to it, or `t' to ``take way'' and `o' to maintain an offset distance to it. 

Next we describe how to translate a set of \texttt{Desires}, $(v,l,c_1,\ldots,c_n) \in {\cal D}$, into a cost function over driving trajectories. A driving trajectory is represented by $(x_1,y_1),\ldots,(x_k,y_k)$, where $(x_i,y_i)$ is the (lateral,longitudinal) location of the car (in egocentric units) at time $\tau\cdot i$. In our experiments, we set $\tau = 0.1\textrm{sec}$ and $k = 10$. The cost assigned to a trajectory will be a weighted sum of individual costs assigned to the desired speed, lateral position, and the label assigned to each of the other $n$ vehicles. Each of the individual costs are descried below.

Given a desired speed $v\in [0,v_{\max}]$, the cost of a trajectory associated with speed is $\sum_{i=2}^k (v - \|(x_i,y_i)-(x_{i-1},y_{i-1})\|/\tau\,)^2$. Given desired lateral position $l\in L$, the cost associated with desired lateral position is $\sum_{i=1}^k \textrm{dist}(x_i,y_i,l)$, where $\textrm{dist}(x,y,l)$ is the distance from the point $(x,y)$ to the lane position $l$. As to the cost due to other vehicles, for any other vehicle let $(x'_1,y'_1),\ldots,(x'_k,y'_k)$ be its predicted trajectory in the host vehicle egocentric units, and let $i$ be the earliest point for which there exists $j$ such that the distance between $(x_i,y_i)$ and $(x'_j,y'_j)$ is small (if there is no such point we let $i = \infty$). If the car is classified as ``give-way'' we would like that $\tau\,i > \tau\,j+0.5$, meaning that we will arrive to the trajectory intersection point at least $0.5$ seconds after the other vehicle will arrive to that point. A possible formula for translating the above constraint into a cost is $[\tau\,(j-i) + 0.5]_+$. Likewise, if the car is classified as ``take-way'' we would like that $\tau\,j > \tau\,i+0.5$, which is translated to the cost $[\tau\,(i-j) + 0.5]_+$. Finally, if the car is classified as ``offset'' we would like that $i$ will be $\infty$ (meaning, the trajectories will not intersect). This can be translated to a cost by penalizing on the distance between the trajectories. 

By assigning a weight to each of the aforementioned costs we obtain a single objective function for the trajectory planner, $\pi^{(T)}$. Naturally, we can also add to the objective a cost that encourages smooth driving. More importantly, we add hard constraints that ensure functional safety of the trajectory. For example, we do not allow $(x_i,y_i)$ to be off the roadway and we do not allow $(x_i,y_i)$ to be close to $(x'_j,y'_j)$ for any trajectory point $(x'_j,y'_j)$ of any other vehicle if $|i-j|$ is small. 

To summarize, we decompose the policy $\pi_\theta$ into a mapping from the agnostic state to a set of \texttt{Desires} and a mapping from the \texttt{Desires} to an actual trajectory. 
The latter mapping is not being learned and is implemented by solving an optimization problem whose cost depends on the \texttt{Desires} and whose hard constraints guarantees functional safety of the policy. It is left to explain how we learn the mapping from the agnostic state to the \texttt{Desires}, which is the topic of the next section. 

\section{Temporal Abstraction}
\label{sec:option}

In the previous section we injected prior knowledge in order to break down the problem in such a way to ensure functional safety. We saw that through RL alone a system complying with functional safety will suffer a very high and unwieldy variance on the reward $R(\bar{s})$ and this can be fixed by splitting the problem formulation into a mapping from (agnostic) state space to \texttt{Desires} using policy gradient iterations followed by a mapping to the actual trajectory which does not involve learning.
It is necessary, however, to inject even more prior knowledge into the problem and decompose the decision making into semantically meaningful components --- and this for two reasons. First, the size of ${\cal D}$ might be quite large and even continuous (in the double-merge scenario described in the previous section we had ${\cal D} = [0,v_{\max}] \times L \times \{g,t,o\}^n$).  Second, the gradient estimator involves the term $\sum_{t=1}^T \nabla_\theta \pi_\theta(a_t|s_t)$. As mentioned above, the variance grows with the time horizon $T$ \cite{Peters-Schaal08} . In our case, the value of $T$ is roughly\footnote{Suppose we work at 10Hz, the merge area is 100
  meters, we start the preparation for the merge 300 meters before it, and we drive at 16
  meters per second ($\approx$ 60 Km per hour). In this case, the value of $T$ for
  an episode is roughly $250$.} 250 which is high enough to create significant variance.

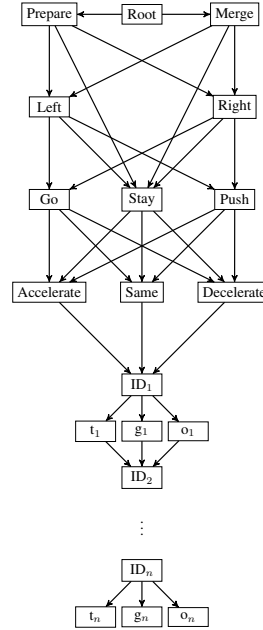
\begin{wrapfigure}{r}{0.3\textwidth}
\begin{center}
\tikzstyle{VertexStyle} = [shape  = rectangle, minimum width    = 6ex, draw]
\tikzstyle{EdgeStyle}   = [->,>=stealth']      
\resizebox{.25\textwidth}{!}{%
\begin{tikzpicture}[scale=1.1] 
\SetGraphUnit{2} 
\Vertex{Root}  
\EA(Root){Merge}
\WE(Root){Prepare}
\SO(Merge){Right}
\SO(Prepare){Left}
\SO(Left){Go}
\SO(Right){Push}
\EA(Go){Stay}
\SO(Stay){Same}
\WE(Same){Accelerate}
\EA(Same){Decelerate}
\SO[L=ID$_1$](Same){ID1}
\SetGraphUnit{1}
\SO[L=g$_1$](ID1){g1}
\WE[L=t$_1$](g1){t1}
\EA[L=o$_1$](g1){o1}
\SO[L=ID$_2$](g1){ID2}
\SO[L=$\vdots$,style={draw=white,fill=white}](ID2){vdots}
\SO[L=ID$_n$](vdots){IDn}
\SO[L=g$_n$](IDn){gn}
\WE[L=t$_n$](gn){tn}
\EA[L=o$_n$](gn){on}

\foreach \d in {Merge,Prepare} {\Edges(Root,\d)};
\foreach \s in {Prepare,Merge} \foreach \d in {Left,Stay,Right} { \Edges(\s,\d) };
\foreach \s in {Left,Right} \foreach \d in {Stay,Go,Push} {\Edges(\s,\d) };
\foreach \s in {Go,Stay,Push} \foreach \d in {Decelerate,Same,Accelerate} { \Edges(\s,\d) };
\foreach \s in {Decelerate,Same,Accelerate} {\Edges(\s,ID1)};
\foreach \i in {1,n} {\Edges(ID\i,t\i) \Edges(ID\i,g\i) \Edges(ID\i,o\i)};
\foreach \s in {g1,t1,o1} {\Edges(\s,ID2)};

\end{tikzpicture}
}
\end{center}
\caption{\small An options graph for the double merge scenario.} \label{fig:options}
\end{wrapfigure}

Our approach 
follows the \emph{options framework} due to \cite{Sutton-Precup-Singh99}.  An \emph{options graph} represents a hierarchical set of decisions organized as a Directed Acyclic Graph (DAG). There is a special node called the ``root'' of the graph. The root node is the only node that has no incoming edges. The decision process traverses the graph, starting from the root node, until it reaches a ``leaf'' node, namely, a node that the has no outgoing edges. Each internal node should implement a policy function that picks a child among its available children. There is a predefined mapping from the set of traversals over the options graph to the set of desires, ${\cal D}$. In other words, a traversal on the options graph is automatically translated into a desire in $\cal D$. Given a node $v$ in the graph, we denote by $\theta_v$ the parameter vector that specifies the policy of picking a child of $v$. Let $\theta$ be the concatenation of all the $\theta_v$, then $\pi_\theta^{(D)}$ is defined by traversing from the root of the graph to a leaf, while at each node $v$ using the policy defined by $\theta_v$ to pick a child node. 

A possible option graph for the double merge scenario is depicted in \figref{fig:options}. The root node first decides if we are within the merging area or if we are approaching it and need to prepare for it. In both cases, we need to decide whether to change lane (to left or right side) or to stay in lane. If we have decided to change lane we need to decide if we can go on and perform the lane change maneuver (the ``go'' node). If it is not possible, we can try to ``push'' our way (by aiming at being on the lane mark) or to ``stay'' in the same lane. This determines the desired lateral  position in a natural way --- for example, if we change lane from lane $2$ to lane $3$, ``go'' sets the desire lateral position to $3$, ``stay'' sets the desire lateral position to $2$, and ``push'' sets the desire lateral position to $2.5$.  Next, we should decide whether to keep the ``same'' speed, ``accelerate'' or ``decelerate''. Finally, we enter a ``chain like'' structure that goes over all the vehicles and sets their semantic meaning to a value in $\{g,t,o\}$. This sets the desires for the semantic meaning of vehicles in an obvious way. Note that we share the parameters of all the nodes in this chain (similarly to Recurrent Neural Networks). 

An immediate benefit of the options graph is the interpretability of the results. Another immediate benefit is that we rely on the decomposable structure of the set $\cal D$ and therefore the policy at each node should choose between a small number of possibilities. Finally, the structure allows us to reduce the variance of the policy gradient estimator.  We next elaborate on the last point. 

As mentioned previously, the length of an episode in the double merge scenario is roughly $T = 250$ steps. This number comes from the fact that on one hand we would like to have enough time to see the consequences of our actions (e.g., if we decided to change lane as a preparation for the merge, we will see the benefit only after a successful completion of the merge), while on the other hand due to the dynamic of driving, we must make decisions at a fast enough frequency ($10$Hz in our case). The options graph enables us to decrease the effective value of $T$ by two complementary ways. First, given higher level decisions we can define a reward for lower level decisions while taking into account much shorter episodes.  For example, when we have already picked the ``lane change'' and ``go'' nodes, we can learn the  policy for assigning semantic meaning to vehicles by looking at episodes of 2-3 seconds (meaning that $T$ becomes $20-30$ instead of $250$). 
Second, for high level decisions (such as whether to change lane or to stay in the same lane), we do not need to make decisions every $0.1$ seconds. Instead, we can either make decisions at a lower frequency (e.g., every second), or implement an ``option termination'' function, and then the gradient is calculated only after every termination of the option. In both cases, the effective value of $T$ is again an order of magnitude smaller than its original value. All in all, the estimator at every node depends on a value of $T$ which is an order of magnitude smaller than the original $250$ steps, which immediately transfers to a smaller variance \cite{Mann2015b}.

To summarize, we introduced the options graph as a way to breakdown the problem into semantically meaningful components where the  \texttt{Desires} are defined through a traversal over a DAG. At each step along the way the learner maps the state space to a small subset of \texttt{Desires} thereby effectively decreasing the time horizon to much smaller sequences and at the same time reducing the output space for the learning problem. The aggregated effect is both in reducing variance and sample complexity of the learning problem.

\section{Experimental Demonstration}
\label{sec:exp}

The purpose of this section is to give a sense of how a challenging negotiation scenario is handled by our framework. The experiment involves propriety software modules (to produce the sensing state, the simulation, and the trajectory planner) and data (for the learning-by-imitation part). It therefore should be regarded as a demonstration rather than a reproducible experiment. We leave to future work the task of conducting a reproducible experiment, with a comparison to other approaches. 

We experimented with the double-merge scenario described in \secref{sec:safe} (see again \figref{fig:doublemerge}). This is a challenging negotiation task as cars from both sides have a strong incentive to merge, and failure to merge in time leads to ending on the wrong side of the intersection. In addition, the reward $R(\bar s)$ associated with a trajectory needs to account not only for the success or failure of the merge operation but also for smoothness of the trajectory control and the comfort  level of all other vehicles in the scene. In other words, the goal of the RL learner is not only to succeed with the merge maneuver but also accomplish it in a smooth manner and without disrupting the driving patterns of other vehicles.

We relied on the following sensing information. The static part of the environment is represented as the geometry of lanes and the free space (all in ego-centric units). Each agent also observes the location, velocity, and heading of every other car which is within 100 meters away from it. Finally, $300$ meters before the merging area the agent receives the side it should be after the merge ('left' or 'right').  For the trajectory planner, $\pi^{(T)}$, we used an optimization algorithm based on dynamic programming.
We used the option graph described in \figref{fig:options}. Recall that we should define a policy function for every node of our option graph.  We initialized the policy at all nodes using imitation learning. Each policy function, associated with every node of the option graph, is represented by a neural network with three fully connected hidden layers. Note that data collected from a human driver only contains the final maneuver, but we do not observe a traversal on the option graph. For some of the nodes, we can infer the labels from the data in a relatively straight forward manner. For example, the classification of vehicles to ``give-way'', ``take-way'', and ``offset'' can be inferred from the future position of the host vehicle relative to the other vehicles. For the remaining nodes we used an implicit supervision. Namely, our option graph induces a probability over future trajectories and we train it by maximizing the (log) probability of the trajectory that was chosen by the human driver.  Fortunately, deep learning is quite good in dealing with hidden variables and the imitation process succeeded to learn a reasonable initialization point for the policy. See \cite{SuppVideos} for some videos. 
For the policy gradient updates we used a simulator (initialized using imitation learning) with self-play enhancement. Namely, we partitioned the set of agents to two sets, $A$ and $B$. Set $A$ was used as reference players while set $B$ was used for the policy gradient learning process. When the learning process converged, we used set $B$ as the reference players and used the set $A$ for the learning process. The alternating process of switching the roles of the sets continued for $10$ rounds. 
See \cite{SuppVideos} for resulting videos. 

\shortpaper{}{%
\subsection*{Acknowledgements} We thank Moritz Werling, Daniel Althoff, and Andreas Lawitz for helpful discussions. 
}

\bibliographystyle{plainnat}
\bibliography{bib}

\appendix

\shortpaper{\section{Variance Reduction} \label{sec:variance} }{}

\section{Proofs}

\begin{subsecproof}{Proof of \thmref{thm:pg}}
The policy $\pi_\theta$ induces a probability distribution over sequences as
follows: given a sequence $\bar{s} =
(s_1,a_1),\ldots,(s_T,a_T)$, we have
\[
P_\theta(\bar{s}) ~=~ \prod_{t=1}^T \prob[s_t | \bar{s}_{1:t-1}] \, \pi_\theta(a_t|s_t) ~.
\]
Note that in deriving the above expression we make no assumptions on 
$\prob[s_t | \bar{s}_{1:t-1}]$. This stands in contrast to Markov
Decision Processes, in which it is assumed that $s_t$ is independent on the past given $(s_{t-1},a_{t-1})$. 
The only assumption we make is that the (random) choice of $a_t$ is solely based on $s_t$, which comes from our architectural design choice of the hypothesis space of policy functions. The remainder of the proof employs the standard likelihood ratio trick (e.g., \cite{Aleksandrov-etal68,Glynn87}) with the observation that since $\prob[s_t | \bar{s}_{1:t-1}]$ does not depend on the parameters $\theta$ it gets eliminated in the policy gradient. This is detailed below for the sake of completeness:
\begin{align*}
  \nabla_\theta \E_{\bar{s} \sim P_\theta}
[R(\bar{s})]  &= \nabla_\theta \sum_{\bar{s}}
                             P_\theta(\bar{s}) R(\bar{s}) &&
                                                             \textrm{(definition
                                                             of expectation)}\\
&= \sum_{\bar{s}} R(\bar{s}) \nabla_\theta P_\theta(\bar{s})  &&
                                                             \textrm{(linearity
                                                                 of derivation)}\\
&= \sum_{\bar{s}} P_\theta(\bar{s}) \,R(\bar{s}) \frac{\nabla_\theta
  P_\theta(\bar{s})}{P_\theta(\bar{s})} 
&& \textrm{(multiply and divide by}~  P_\theta(\bar{s}))  \\
&= \sum_{\bar{s}} P_\theta(\bar{s}) \,R(\bar{s}) \nabla_\theta
  \log(P_\theta(\bar{s}))  
&& \textrm{(derivative of the log)} \\
&= \sum_{\bar{s}} P_\theta(\bar{s}) \,R(\bar{s}) \nabla_\theta
  \left( \sum_{t=1}^T \log(\prob[s_t|\bar{s}_{1:t-1}]) + \sum_{t=1}^T \log(\pi_\theta(s_t,a_t))\right)
&& \textrm{(def of } P_\theta) \\
&= \sum_{\bar{s}} P_\theta(\bar{s}) \,R(\bar{s}) 
  \left( \sum_{t=1}^T \nabla_\theta\log(\prob[s_t|\bar{s}_{1:t-1}]) + \sum_{t=1}^T \nabla_\theta\log(\pi_\theta(s_t,a_t))\right)
&& \textrm{(linearity of derivative)} \\
&= \sum_{\bar{s}} P_\theta(\bar{s}) \,R(\bar{s}) 
\left( 0 + \sum_{t=1}^T  \nabla_\theta  \log(\pi_\theta(s_t,a_t)) \right)
\\
&= \E_{\bar{s} \sim P_{\theta}}\left[ R(\bar{s}) \sum_{t=1}^T  \nabla_\theta
  \log(\pi_\theta(s_t,a_t)) \right] ~.
\end{align*}
This concludes our proof.
\end{subsecproof}

\begin{subsecproof}{Proof of \lemref{lem:Bt}}
\begin{align*}
&\E_{\bar{s}}\left[ b_{t,i} \nabla_{\theta_i}
  \log(\pi_\theta(s_t,a_t)) \right] = 
\sum_{\bar{s}} P_\theta(\bar{s}) b_{t,i} \nabla_{\theta_i}
  \log(\pi_\theta(s_t,a_t)) \\
&= \sum_{\bar{s}}  P_\theta( \bar{s}_{1:t-1},s_t ) \, \pi_\theta( a_t
  | s_t)  \, P_\theta(\bar{s}_{t+1:T} | \bar{s}_{1:t}) \, b_{t,i} \nabla_{\theta_i}
  \log(\pi_\theta(s_t,a_t)) \\
&= \sum_{\bar{s}_{1:t-1},s_t} P_\theta( \bar{s}_{1:t-1},s_t ) \,b_{t,i}\, \sum_{a_t} \pi_\theta( a_t
  | s_t) \,\nabla_{\theta_i}
  \log(\pi_\theta(s_t,a_t))\, \sum_{\bar{s}_{t+1:T}} P_\theta(\bar{s}_{t+1:T} | \bar{s}_{1:t}) \\
&= \sum_{\bar{s}_{1:t-1},s_t} P_\theta( \bar{s}_{1:t-1},s_t ) \,b_{t,i}\, \left[\sum_{a_t} \pi_\theta( a_t
  | s_t) \,\nabla_{\theta_i}
  \log(\pi_\theta(s_t,a_t)) \right] ~.
\end{align*}
By \lemref{lem:variance_lem}, the term in the parentheses is $0$, which concludes our proof. 
\end{subsecproof}

\begin{subsecproof}{Proof of \lemref{lem:Qestimator}}
We have:
\begin{align*}
&\E_{\bar{s} \sim P_\theta,\xi}\left[ \sum_{t=1}^T \hat{Q}_\theta(\bar{s}_{1:t},\xi) \nabla_\theta \log(\pi_\theta(a_t|s_t)) \right]  \\
&= \sum_{t=1}^T \sum_{\bar{s}_{1:t}} P_\theta(\bar{s}_{1:t}) \E_\xi \hat{Q}_\theta(\bar{s}_{1:t},\xi) \nabla_\theta \log(\pi_\theta(a_t|s_t)) \sum_{\bar{s}_{t+1:T}} P_\theta(\bar{s}_{t+1:T} | \bar{s}_{1:t}) \\
&= \sum_{t=1}^T \sum_{\bar{s}_{1:t}} P_\theta(\bar{s}_{1:t}) Q_\theta(\bar{s}_{1:t}) \nabla_\theta \log(\pi_\theta(a_t|s_t)) \\
&= \sum_{t=1}^T \sum_{\bar{s}_{1:t}} P_\theta(\bar{s}_{1:t}) \nabla_\theta \log(\pi_\theta(a_t|s_t)) \sum_{\bar{s}_{t+1:T}} P_\theta(\bar{s}_{t+1:T}|\bar{s}_{1:t}) R(\bar{s}) \\
&= \E_{\bar{s}}\left[ R(\bar{s}) \sum_{t=1}^T  \nabla_\theta \log(\pi_\theta(a_t|s_t))\right] ~.
\end{align*}
The claim follows from \thmref{thm:pg}.
\end{subsecproof}

\begin{subsecproof}{Proof of \lemref{lemma:safe}}
We have
\[
\E[R(\bar{s})] \in [-pr - (1-p), -pr + (1-p)] ~~~\Rightarrow~~~
\E[R(\bar{s})]^2 \le (pr+(1-p))^2
\]
and $
\E[R(\bar{s})^2] \ge p r^2$. 
The claim follows since $\var[R(\bar{s})] = \E[R(\bar{s})^2]  - \E[R(\bar{s})]^2$.
\end{subsecproof}

\subsection{Technical Lemmas}

\begin{lemma} \label{lem:variance_lem}
Suppose that $\pi_\theta$ is a function such that for every $\theta$
and $s$ we have $\sum_a \pi_\theta(a|s) = 1$. Then, 
\[
\sum_a \pi_\theta(a|s) \nabla_\theta \log \pi_\theta(a|s)  = 0
\]
\end{lemma}
\begin{proof}
\begin{align*}
\sum_a \pi_\theta(a|s) \nabla \log \pi_\theta(a|s) &= 
\sum_a \pi_\theta(a|s)  \frac{\nabla_\theta \pi_\theta(a|s)}{\pi_\theta(a|s)}\\
&= \sum_a \nabla_\theta \pi_\theta(a|s) = \nabla_\theta \sum_a \pi_\theta(a|s) = \nabla_\theta 1 = 0 
\end{align*}
\end{proof}

\end{document}